\documentclass[letterpaper, 10 pt, conference]{ieeeconf}  
\usepackage{amsmath}

\IEEEoverridecommandlockouts                              

\usepackage{amsmath, amssymb}
\usepackage{mathtools}
\usepackage{graphicx}
\usepackage{listings}
\usepackage{cite}
\usepackage{xcolor}
\usepackage[nolist,nohyperlinks]{acronym}

\usepackage{amsthm}
\usepackage{siunitx}
\usepackage{booktabs}
\usepackage{tabularray}
\usepackage{makecell}
\usepackage{multirow}
\usepackage{stfloats} 
\usepackage{tikz}
\usetikzlibrary{arrows.meta, calc, patterns, decorations.pathmorphing}

\usepackage[T1]{fontenc}      
\usepackage{pgfplots}
\usepgfplotslibrary{groupplots}
\pgfplotsset{compat=1.18}

\usepackage{hyperref}
\usepackage{pgfplotstable}

\newcommand{\prel}{\mathbf{p}_{\mathrm{rel}}}
\newcommand{\prelhat}{\hat{\mathbf{p}}_{\mathrm{rel}}}
\newcommand{\vrel}{\mathbf{v}_{\mathrm{rel}}}
\newcommand{\vrelhat}{\hat{\mathbf{v}}_{\mathrm{rel}}}
\newcommand{\norm}[1]{\left\lVert #1 \right\rVert}

\DeclareMathOperator{\VO}{\mathrm{VO}}
\DeclareMathOperator{\UVO}{\mathrm{UVO}}
\DeclareMathOperator{\DCPA}{\mathtt{DCPA}}
\DeclareMathOperator{\TCPA}{\mathtt{TCPA}}

\DeclareMathOperator*{\argmin}{arg\,min} %
\DeclareMathOperator*{\argmax}{arg\,max} %

\DeclareMathOperator{\CPA}{\mathtt{CPA}}

\renewenvironment{proof}[1][\proofname]{\par\itshape\noindent\textit{#1.}\enspace}{\par}

\newtheorem{definition}{Definition}
\newtheorem{theorem}{Theorem}
\newtheorem{lemma}{Lemma}

\newtheorem{assumption}{Assumption}
\newtheorem{corollary}{Corollary}

\theoremstyle{definition} 

\makeatletter
\def\@opargbegintheorem#1#2#3{\trivlist
  \item[]{#1\ #2\ (\itshape #3\bfseries): }\itshape}
\makeatother

\title{\LARGE \bf
Expressiveness, Equivalence, and Uncertainty in Velocity Obstacles and Closest Point of Approach Metrics
}

\author{Elizabeth Dietrich$^{*, 1}$, Liam M. Imagawa$^{*, 1}$, Hanna Krasowski$^{1}$, \\Aurora Haraldsen$^{2}$, Murat Arcak$^{1}$, and Kristin Y. Pettersen$^{2}$ 
\thanks{$^{*}$Equal contribution.}
\thanks{$^{1}$E. Dietrich, L. M. Imagawa, H. Krasowski, and M. Arcak are with the University of California, Berkeley, USA. Email: \texttt{\{eadietri, liamimagawa, krasowski, arcak\}@berkeley.edu}.}
\thanks{$^{2}$A. Haraldsen and K.\,Y. Pettersen are with the Norwegian University of Science and Technology, Trondheim, Norway. Email: \texttt{\{Aurora.Haraldsen, Kristin.Y.Pettersen\}@ntnu.no}.}
\thanks{
This work was funded in part by the NSF under grant CNS-2111688.
E. Dietrich was also supported by an NSF Graduate Research Fellowship. A. Haraldsen and K.Y. Pettersen were partially supported by VISTA – a basic research program in collaboration between The Norwegian Academy of Science and Letters, and Equinor.}
\vspace{-3mm}
}

\begin{document}

\begin{acronym}
    \acro{TCPA}{Time to Closest Point of Approach}
    \acro{DCPA}{Distance to Closest Point of Approach}
    \acro{CPA}{Closest Point of Approach}
    \acro{VO}{Velocity Obstacle}
    \acro{UVO}{Uncertainty-Aware Velocity Obstacle}
    \acro{COLREG}{Convention on the International Regulations for Preventing Collisions at Sea }
\end{acronym}

\maketitle
\thispagestyle{empty}
\pagestyle{empty}

\begin{abstract}
Time to Closest Point of Approach (TCPA), Distance to Closest Point of Approach (DCPA), and Velocity Obstacles (VOs), are widely used to assess and mitigate collision risk in autonomous navigation, yet their relationship and behavior under uncertainty remain largely unexplored.
Assuming perfect state information, we establish a relationship between these representations over finite and infinite prediction horizons and derive conditions under which they provide equivalent characterizations of collision risk. 
Under bounded uncertainty, we extend the Closest Point of Approach (CPA) metrics and VO to convex relative-state sets.
We show that in this setting, independently computed TCPA and DCPA bounds lose the joint relationship required for VO membership, while uncertainty-aware VOs preserve this relationship through a set-valued representation of collision-inducing velocities.
\end{abstract}

\vspace{-1mm}
\section{Introduction}\label{sec:intro}
\vspace{-1mm}
Collision risk metrics are essential for monitoring and control of autonomous navigation, where potential collisions must be identified and mitigated in a timely manner \cite{marvel2014, vesentini2024}. 
In unstructured environments, including maritime~\cite{huang2018, huang2020}, aviation \cite{errico}, and space \cite{Klinkrad2006} domains, risk assessment for navigation cannot rely on predefined lanes or other environmental structures. Instead, collision risk must be determined from observations of the relative kinematics and embodiment of nearby agents. We focus on criticality metrics that capture the spatial and temporal proximity of potential collisions across domains, in particular the \ac{CPA} metrics \cite{silveira2021, radanovic2018} and the \ac{VO}~\cite{fiorini1998, wilkie2009, kuwata2011, zhang2021, vanderberg2011, kufoalor2018}.  

\begin{figure}
    \centering              

\definecolor{colorA}{HTML}{16A085} 
\definecolor{colorB}{HTML}{8E44AD} 
\definecolor{colorRisk}{HTML}{CE1256} 
\definecolor{colorAgent}{RGB}{40, 40, 40}    

\newcommand{\agentsymbol}[5]{%
  \begin{scope}[shift={(#1,#2)}, rotate=#3]
    \fill[#4, draw=#5, line width=0.6pt]
      (0, 0.20) -- (-0.12, -0.12) -- (0, -0.04) -- (0.12, -0.12) -- cycle;
  \end{scope}
}

\newcommand{\agentsymbolcircle}[7]{%
  \draw[#6, line width=1.0pt] (#1,#2) circle (#7);
  \agentsymbol{#1}{#2}{#3}{#4}{#5}
}

\begin{tikzpicture}[
  >=Stealth,
  font=\small,
  line cap=round,
  line join=round
]

\def\Lx{2.0}   \def\Ly{-3.5}    
\def\Rx{7.5} \def\Ry{-3.5}    

\def\Ax{0}    \def\Ay{0}
\def\Bx{-1.8} \def\By{3.0}

\def\vAx{0.0}  \def\vAy{1.3}    
\def\vBx{0.7}  \def\vBy{0.4}    

\pgfmathsetmacro{\nvAx}{-\vAx}
\pgfmathsetmacro{\nvAy}{-\vAy}
\pgfmathsetmacro{\nvBx}{-\vBx}
\pgfmathsetmacro{\nvBy}{-\vBy}

\pgfmathsetmacro{\vRelx}{\vBx - \vAx}
\pgfmathsetmacro{\vRely}{\vBy - \vAy}
\pgfmathsetmacro{\vBangle}{atan2(\vBy, \vBx) - 90}

\def\rA{0.35}
\def\rB{0.5}
\pgfmathsetmacro{\rSum}{\rA + \rB}   

\draw[gray!50, line width=1pt] (3.8, -4.5) -- (3.8, 1.5);


\begin{scope}[shift={(1.0, 0.0)}]   
  \draw[gray!60, rounded corners=4pt, line width=1pt]
    (0, 0) rectangle (2.7, 1.55);

  \agentsymbol{0.35}{1.18}{0}{colorA}{colorA}
  \node[right, colorA] at (0.6, 1.18) {\textbf{E / ego}};

  \agentsymbol{0.35}{0.78}{0}{colorB}{colorB}
  \node[right, colorB] at (0.6, 0.78) {\textbf{O / obstacle}};

  \draw[colorRisk, line width=1.2pt] (0.23, 0.38) -- (0.47, 0.38);
  \node[right, colorRisk] at (0.6, 0.38) {risk metrics};
\end{scope}








\begin{scope}[shift={(\Lx, \Ly)}]

  \pgfmathsetmacro{\vRelNorm}{sqrt(\vRelx*\vRelx + \vRely*\vRely)}
    \pgfmathsetmacro{\vRelUx}{\vRelx / \vRelNorm}
    \pgfmathsetmacro{\vRelUy}{\vRely / \vRelNorm}
    \def\dashLen{4}
    \draw[black!40, line width=1.0pt, dashed, dash pattern=on 5pt off 3pt]
    (\Bx, \By) -- ({\Bx + \dashLen*\vRelUx}, {\By + \dashLen*\vRelUy});


  \draw[->, colorB, line width=1.1pt]
    (\Bx, \By) -- ++(\vBx, \vBy)
    node[above, xshift=-0.3cm, yshift=-0.05cm,  colorB, font=\scriptsize] {$\mathbf{v}_O$};

  \draw[->, black!40, line width=1.3pt]
    (\Bx, \By) -- ++(\vRelx, \vRely)
    node[left, black!40, font=\scriptsize] {$\mathbf{v}_\mathrm{rel}$};

  \agentsymbolcircle{\Bx}{\By}{\vBangle}{colorB}{colorB}{colorB}{\rB}
  \node[colorB, font=\scriptsize] at ({\Bx - \rB - 0.2}, \By) {$r_O$};

  \agentsymbolcircle{\Ax}{\Ay}{0}{colorA}{colorA}{colorA}{\rA}
  \node[colorA, font=\scriptsize] at (-0.25, -0.5) {$r_E$};

  \draw[->, colorA, line width=1.4pt]
    (\Ax, \Ay) -- ++(\vAx, \vAy)
    node[left, xshift=2, colorA, font=\scriptsize] {$\mathbf{v}_E$};

  \draw[colorRisk, {Bar[]}-{Bar[]}, line width=1.25pt]
    (0, 0.0) -- ++(-0.38*\vRely, 0.38*\vRelx)
    node[midway, right, xshift=3.5pt, yshift=3pt, colorRisk, font=\scriptsize] {DCPA};

    \draw[colorRisk, {Bar[]}-{Bar[]},line width=1.25pt] ({\Bx+0.35},{\By+0.2}) --++ ({3.4*\vRelUx}, {3.4*\vRelUy}) node[midway, above, xshift=0.45cm, colorRisk, font=\scriptsize] {$\propto \mathrm{TCPA}$};

  \node[gray!60, font=\large\bfseries] at (0.0, -1.0) {{\small DCPA \& TCPA}};

\end{scope}

\begin{scope}[shift={(\Rx, \Ry)}, scale=1]

  \pgfmathsetmacro{\dist}{sqrt(\Bx*\Bx + \By*\By)}
  \pgfmathsetmacro{\halfAngle}{asin(\rSum/\dist)}
  \pgfmathsetmacro{\centerAngle}{atan2(\By, \Bx)}
  \pgfmathsetmacro{\angLeft}{\centerAngle + \halfAngle}
  \pgfmathsetmacro{\angRight}{\centerAngle - \halfAngle}
  \def\coneLen{5.0}

  \draw[black!40, line width=1.1pt] (\Bx, \By) circle (\rSum);
  \node[black!40, font=\scriptsize] at (\Bx, {\By + \rSum + 0.18}) {$r_O$ {+} $r_E$};

  \agentsymbolcircle{\Bx}{\By}{\vBangle}{colorB}{colorB}{colorB}{\rB}
  \node[colorB, font=\scriptsize] at ({\Bx - \rB - 0.2}, \By) {$r_O$};

  \draw[->, colorB, line width=1.1pt]
    (\Bx, \By) -- ++(\vBx, \vBy)
    node[above, xshift=-0.3cm, yshift=-0.05cm, colorB, font=\scriptsize] {$\mathbf{v}_O$};

  \draw[->, black!40, line width=1.0pt]
    (\Ax, \Ay) -- ++(\nvBx, \nvBy)
    node[below right, black!40, font=\scriptsize] {$-\mathbf{v}_O$};

  \agentsymbolcircle{\Ax}{\Ay}{0}{colorA}{colorA}{colorA}{\rA}
  \node[colorA, font=\scriptsize] at (0.25, -0.5) {$r_E$};

  \draw[->, colorA, line width=1.2pt]
    (\Ax, \Ay) -- ++(\vAx, \vAy)
    node[left, colorA, xshift=4, yshift=2, font=\scriptsize] {$\mathbf{v}_E$};

  \fill[colorRisk, opacity=0.12]
    (0,0)
    -- ({\coneLen*cos(\angLeft)},  {\coneLen*sin(\angLeft)})
    -- ({\coneLen*cos(\angRight)}, {\coneLen*sin(\angRight)})
    -- cycle;

  \draw[colorRisk, line width=1.0pt]
    (0,0) -- ({\coneLen*cos(\angLeft)},  {\coneLen*sin(\angLeft)});
  \draw[colorRisk, line width=1.0pt]
    (0,0) -- ({\coneLen*cos(\angRight)}, {\coneLen*sin(\angRight)});

  \node[colorRisk, font=\small] at (-0.7, 4.5) {$\mathrm{VO}_{E|O}$};
  
  \draw[->, colorRisk, line width=1.2pt]
    ($ (\Ax, \Ay) - (\vBx, \vBy) $) -- ++(\vAx, \vAy)
    node[left, colorRisk, xshift=-3, yshift=-3, font=\scriptsize] {$\mathbf{v}_E$};

  \node[gray!60, font=\large\bfseries] at (-2.0, -1.0) {{\small VO}};

\end{scope}

\pgfresetboundingbox
\path (-0.7, -4.0) rectangle (11.5, 1.7);

\end{tikzpicture}
        \caption{Overview of risk metrics. Left: DCPA is the perpendicular distance from the position $\mathbf{p}_E$ of agent $E$, to the relative trajectory (gray dashed line). The relative trajectory passes through $\mathbf{p}_O$ and is aligned with $\vrel$. 
        TCPA is the time until the agents reach the \ac{CPA} and is proportional to the distance between $\mathbf{p}_O$ and the \ac{CPA}.  Right: The \ac{VO} represented as a collision cone in relative-velocity space. The cone is bounded by tangent rays extending from the ego reference point $\mathbf{p}_E$ to the circular collision region centered at $\mathbf{p}_O$, with radius $r_E+r_O$. }
        \vspace{-5mm}
    \label{fig:risk_metrics_SOTA}
\end{figure}

\ac{CPA} metrics assess collision risk by considering the minimum separation between agents over a prediction horizon.
Specifically, the \ac{DCPA} and the \ac{TCPA} provide scalar measures of collision risk for a given relative velocity and position, yielding a compact trajectory risk measure \cite{kuwata2011, kufoalor2018, zhang2021, gai2024, wilkie2009}. To support their use in practical evasive maneuvering, \ac{CPA} metrics have been extended to incorporate domain-specific information, including traffic rules, agent dimensions, and other movement information~\cite{huang2020, Ha21,Zhao22,Zhou2026}. 

However, from a control perspective, collision avoidance requires evaluating collision risk over all feasible velocities across the prediction horizon. The \ac{VO} provides a set-valued 
representation that partitions the velocity space into collision-inducing and collision-free velocities, enabling direct identification of evasive control actions.
The \ac{VO} has been applied in a variety of motion planning and control contexts, both as a collision-avoidance mechanism and as a component of broader control frameworks, including integration with control barrier functions \cite{roncero2025, haraldsen2024}, model predictive control \cite{cheng2017, piccinelli2023}, and reinforcement learning \cite{Xie2023}. 

Despite their common objective of characterizing collision risk, the relationship between \ac{CPA} metrics and \ac{VO} representations remains largely unexplored. 
Existing work relates \ac{VO} membership to the \ac{DCPA} condition \cite{huang2018}, but does not account for the \ac{TCPA} condition. When the \ac{TCPA} is negative, the point of closest approach occurs in the past; therefore, the \ac{DCPA} condition alone does not indicate a future collision. We show that the \ac{DCPA} condition is necessary but not sufficient for \ac{VO} membership, and formally establish the additional \ac{TCPA} condition required for equivalence. 
Furthermore, the geometric equivalence between the \ac{VO} boundary and the \ac{DCPA} condition for non-negative \ac{TCPA}, established in \cite[Prop. 2.1]{kufoalor2018}, is recovered as a special case of our results. 
Our main contributions include:
    \begin{itemize}
        \item We derive necessary and sufficient conditions for \ac{VO} membership in terms of \ac{TCPA} and \ac{DCPA} over finite and infinite prediction horizons.
        \item We formulate an uncertainty-aware \ac{VO} for general convex uncertainty sets, preserving its exact geometry.
        \item We demonstrate that under bounded uncertainty, independently computed \ac{TCPA} and \ac{DCPA} bounds lose the joint relationship required for \ac{VO} membership.
     \end{itemize}

The remainder of the paper is organized as follows.
Sec.~\ref{sec:prelim} introduces the considered metrics. Sec.~\ref{sec:theory} derives necessary and sufficient conditions for \ac{VO} membership using \ac{CPA} metrics. Sec.~\ref{sec:uncertainty} formulates an uncertainty-aware \ac{VO}. Finally, Sec.~\ref{sec:experiments} evaluates the metrics under bounded uncertainty.

\section{Preliminaries}\label{sec:prelim}
We consider collision risk assessment in unstructured environments, where structural constraints on agent motion may not be available. For prediction, we therefore assume that agents maintain constant velocities $\mathbf{v}_E, \mathbf{v}_O$ over the prediction horizon, which may be finite or unbounded. 

\subsection{Notation}
We denote vectors in bold, sets by calligraphic letters, and functions in typewriter font. For $\mathbf{u}, \mathbf{v} \in \mathbb{R}^2$, the two-dimensional cross product is defined as $\mathbf{u} \times \mathbf{v} = u_x v_y - u_y v_x$. For subsets $\mathcal{X}, \mathcal{Y}$ of the Euclidean space, their Minkowski sum is defined as $\mathcal{X} \oplus \mathcal{Y}= \{x + y \mid x \in \mathcal{X}, y \in \mathcal{Y}\}$. All norms are taken to be the Euclidean norm. 
We consider ego and obstacle agents, denoted by subscripts $E$ and $O$, respectively.
Their positions and velocities are denoted by $\mathbf{p}_E, \mathbf{p}_O\in \mathbb{R}^2$ and $\mathbf{v}_E, \mathbf{v}_O \in \mathbb{R}^2$, respectively.
The relative position $\prel$ and velocity $\vrel$ of the obstacle with respect to the ego are defined as $\prel = \mathbf{p}_O - \mathbf{p}_E$ and $\vrel = \mathbf{v}_O - \mathbf{v}_E$.

\subsection{Scalar Risk Metrics}
Two widely used \ac{CPA} metrics are \ac{DCPA}, the minimum separation distance between two agents, and \ac{TCPA}, the time associated with this closest approach~\cite{kuwata2011, kufoalor2018, zhang2021, gai2024, wilkie2009}. 
Their scalar form enables computationally efficient assessment and facilitates the specification of risk thresholds. 
Given relative position $\prel$ and relative velocity $\vrel$, the \ac{CPA} metric functions are defined as follows:
\begin{align}
    &\DCPA(\prel, \vrel) = 
    \begin{cases}
        \norm{\prel} &\text{if } \norm{\vrel} = 0 \\
        \frac{|\vrel \times \prel|}{\norm{\vrel}} &\text{otherwise,}
    \end{cases}\label{eq:dcpa} \\
    &\TCPA(\prel, \vrel)  = 
    \begin{cases}
        0 &\text{if } \norm{\vrel} = 0 \\
        - \frac{\prel \cdot \vrel}{\norm{\vrel}^2} &\text{otherwise.}
    \end{cases}\label{eq:tcpa}
\end{align}
For $\vrel \neq \mathbf{0}$, 
we obtain the equivalent expression:
\begin{equation}\label{eq:dcpa-dot}
    \DCPA^2(\prel, \vrel) = \norm{\prel}^2 - \frac{(\prel \cdot \vrel)^2}{\norm{\vrel}^2}.
\end{equation}

\ac{TCPA} captures the temporal aspect of an encounter, while \ac{DCPA} measures spatial proximity without indicating the side of closest approach. When this distinction is relevant, we define a signed
$\DCPA_s: \mathbb{R}^2 \times (\mathbb{R}^2 \setminus \{\mathbf{0}\}) \to \mathbb{R}$ as follows:
\begin{equation}
    \DCPA_s(\prel, \vrel) =\frac{\vrel \times \prel}{\norm{\vrel}}.
\end{equation}

\subsection{Velocity Obstacle}

Consider ego and obstacle agents represented by convex sets $\mathcal{E}, \mathcal{O} \subseteq \mathbb{R}^2$, where their occupied space in the absolute position frame is defined by ${\mathbf{p}_E} \oplus \mathcal{E}$ and ${\mathbf{p}_O} \oplus \mathcal{O}$, respectively.
The reflection $-\mathcal{O} = \{-\mathbf{o} \mid \mathbf{o} \in \mathcal{O}\}$ and the Minkowski sum $\mathcal{E} \oplus (-\mathcal{O})$ characterize the set of relative positions for which the ego and obstacle overlap, indicating a collision. In contrast to the \ac{CPA} metrics, the \ac{VO} retains spatial information needed to characterize collision geometry.
Specifically, the \ac{VO} is the set of ego velocities $\mathbf{v}_E$ that result in a collision at some time during the prediction horizon:
\begin{equation} \label{eq:vo}
    \VO_{E\mid O} = \{ \mathbf{v}_E \mid \exists t \geq 0 ,  \prel + \vrel t \in \mathcal{E} \oplus (-\mathcal{O}) \}.
\end{equation}
Equivalently, the \ac{VO} can be expressed as a collision cone, comprising the set of relative velocities $\vrel$ whose rays from $\prel$ intersect $\mathcal{E} \oplus (-\mathcal{O})$.

\subsection{Problem Statement}
We consider collision risk arising from the relative states $\prel, \vrel \in \mathbb{R}^2$ of two agents with fixed geometry, assuming perfect observation and constant relative velocity.
Under these assumptions, the \ac{CPA} metrics and \ac{VO} provide complementary characterizations of collision risk.
We aim to establish a formal relationship between \ac{TCPA}, \ac{DCPA}, and the collision-inducing velocities represented by the \ac{VO}, considering both infinite and finite time horizons.
We then relax the assumption of perfect observation by introducing bounded uncertainty in the relative states. We characterize the effect of this uncertainty on the collision risk metrics, including a computational complexity evaluation.

\section{Theoretical Analysis}\label{sec:theory}

In this section, we derive necessary and sufficient conditions for \ac{VO} membership in terms of \ac{TCPA} and \ac{DCPA} in the absence of uncertainty. 
For brevity, we omit function arguments of \ac{TCPA} and \ac{DCPA}.  

\begin{assumption}[Perfect Observations]\label{a2}
    The current positions $\mathbf{p}_E, \mathbf{p}_O$ and velocities $\mathbf{v}_E, \mathbf{v}_O$ of the two agents are observed without uncertainty.
\end{assumption}
\noindent Additionally, we only consider circular agents, for which the collision region is characterized by a single radius $R$.

\begin{definition}[Circular \acp{VO}]\label{def:circles}
The ego and obstacle agent geometries are modeled as circular sets with radii $r_E$ and $r_O$, i.e., $\mathcal{E} = \{ \mathbf{x} \in \mathbb{R}^2 \,|\, \|  \mathbf{x} \| \leq r_E\}$ and $\mathcal{O} = \{ \mathbf{x} \in \mathbb{R}^2 \,|\, \| \mathbf{x} \| \leq r_O\}$. The resulting collision region, $\mathcal{E} \oplus (-\mathcal{O})$, is a circle of radius $R = r_E + r_O$, and \eqref{eq:vo} reduces to:
\begin{equation}\label{eq:vo-circ}
    \VO_{E\mid O} = \{\mathbf{v}_E \in \mathbb{R}^2 \mid \exists t \geq 0, \norm{\prel + \vrel t} \leq R \}.
\end{equation}
\end{definition}

The \ac{VO} condition in \eqref{eq:vo-circ} depends on the separation distance between agents along their relative trajectory, defined as $d(t, \prel, \vrel)\triangleq\norm{\prel + \vrel t}, \forall t \in \mathbb{R}$.
This distance can be decomposed in terms of \ac{TCPA} and \ac{DCPA} as follows.
\begin{lemma}\label{lemma:sq-dist}
    Given $\vrel \neq \mathbf{0}$, for every $t \in \mathbb{R}$,
    \begin{equation}\label{eq:separation_identity}
        d^2(t, \prel, \vrel)=\norm{\vrel}^2 \left(t - \TCPA \right)^2 + \DCPA^2.
    \end{equation}
\begin{proof} 
\vspace{-4mm}
    \begin{align}
        d^2(t, \prel, \vrel) 
        & = \norm{\vrel}^2\left(t + \frac{\prel\cdot\vrel}{\norm{\vrel}^2}\right)^2 \notag \\
        & \quad \quad \quad \quad \quad - \frac{(\prel\cdot\vrel)^2}{\norm{\vrel}^2} + \norm{\prel}^2. \notag
    \end{align}
    Combining with \eqref{eq:tcpa} and \eqref{eq:dcpa-dot} yields \eqref{eq:separation_identity}. \qed
\end{proof} 
\end{lemma} 
\noindent For zero relative velocity, this separation distance is constant. Therefore, $\vrel = \mathbf{0}$ yields $ d^2(t, \prel, \vrel)= \DCPA^2.$

To determine whether a relative trajectory violates the safety radius $R$, it suffices to evaluate the minimum relative distance.
For each pair $(\prel, \vrel)$, $\TCPA$ is a global minimizer of $d(t, \prel, \vrel)$ over $t \in \mathbb{R}$, i.e., $\argmin_{t\in \mathbb{R}} d(t, \prel, \vrel) = \TCPA$.
However, predictive safety is typically only concerned with $t\geq 0$. Therefore, the following corollaries partition this domain according to the sign of the minimizer, $\TCPA$, to characterize the minimum distance attained for $t\geq 0$.

\begin{corollary}\label{cor:tcpagreaterthan0}
If $\TCPA > 0$, the minimizer of $d(t, \prel, \vrel)$ over $t\geq 0$ is $t=\TCPA > 0$, with minimum value $d(\TCPA, \prel, \vrel) = \DCPA$.
\begin{proof}
The global minimum of $d(t, \prel, \vrel)$ is at $t = \TCPA$.
Because $\TCPA > 0$, 
\begin{equation}
    \min_{t \geq 0} d(t, \prel, \vrel) = d(\TCPA, \prel, \vrel) = \DCPA. \notag \qed
\end{equation}
\end{proof}
\end{corollary}

\begin{corollary}\label{cor:tcpalessthan0}
If $\TCPA \leq 0$, the minimizer of $d(t, \prel, \vrel)$ over $t\geq 0$ is attained at the boundary $t=0$, with minimum value $d(0, \prel, \vrel) = \| \prel \|$.
\begin{proof}
The global minimum of $d(t, \prel, \vrel)$ is at $t = \TCPA$.
Because $\TCPA \leq 0$, the function $d$ is monotonically increasing with respect to $t \geq 0$. Therefore, for any $t \geq 0$, $d^2(t, \prel, \vrel) \geq d^2(0, \prel, \vrel) = \| \prel \|^2$, which implies
\begin{equation}
\min_{t \geq 0} d(t, \prel, \vrel) = d(0, \prel, \vrel) = \| \prel \|. \notag \qed
\end{equation}
\end{proof}
\end{corollary}

Depending on the sign of \ac{TCPA}, the minimum separation distance over $t\geq 0$ is determined by either \ac{DCPA} or the current separation distance. Based on this relationship, we show in the following that the infinite-horizon \ac{VO} can be characterized in terms of the \ac{CPA} metrics. 

\begin{theorem}[Infinite-Horizon Equivalence]\label{thm:diskvo-cpa}
Given Definition~\ref{def:circles} and Assumption~\ref{a2},
\begin{align}\label{eq:infinite_time_relationship}
        &\exists t \geq 0 , \norm{\prel + \vrel t} \leq R \iff \\
        &(\TCPA > 0 \wedge \DCPA \leq R) \vee
    (\TCPA \leq 0 \wedge \norm{\prel} \leq R).\notag
\end{align}
\begin{proof} We refer to the left-hand side of \eqref{eq:infinite_time_relationship} as the \ac{VO} condition, and the right hand side as \ac{CPA} metrics.\\
\textbf{\ac{VO} Condition $\implies$ \ac{CPA} Metrics.}\\
Suppose 
 $   \exists t \geq 0$ 
 \text{ s.t. } 
 $\| \prel + \vrel t\| =d(t, \prel, \vrel) \leq R. \notag $
Then, the minimum of $d(t, \prel, \vrel)$ over $t\geq 0$ is bounded by $R$. 
If $\TCPA \leq 0$, the minimum value of $d(t, \prel, \vrel)$ is $\| \prel \|$, by Corollary \ref{cor:tcpalessthan0}. Hence,  $(\TCPA \leq 0 \wedge \| \prel \| \leq R)$.
If $\TCPA>0$, the minimum value of $d(t, \prel, \vrel)$ is $\DCPA$, by Corollary \ref{cor:tcpagreaterthan0}. Hence, $(\TCPA > 0 \wedge \DCPA \leq R)$.

\noindent \textbf{\ac{CPA} Metrics $\implies$ \ac{VO} Condition.}\\
Suppose 
 \(   (\TCPA > 0 \wedge \DCPA \leq R) \vee (\TCPA \leq 0 \wedge \| \prel \| \leq R) \notag
 \)
holds.
If $\TCPA > 0$ and $\DCPA \leq R$, choose $t' = \TCPA > 0$.
Since $d(\TCPA, \prel, \vrel) = \DCPA \leq R$, it follows that $\exists t \geq 0$ s.t. $\| \prel + \vrel t \| \leq R.$
If $\TCPA \leq 0$ and $\| \prel \| \leq R$, choose $t'=0$. Since $ d(0, \prel, \vrel) = \| \prel \| \leq R$, it follows that $\exists t \geq 0$ s.t. $\| \prel + \vrel t \| \leq R.$ 
\qed
\end{proof}
\end{theorem}

The condition $\|\prel\| \leq R$ indicates that the agents' geometries already overlap. 
For initially separated agents, $\|\prel\| > R$, the following provides a more practical characterization of the \ac{VO} for future collisions.

\begin{corollary}\label{cor:vo-cpa}
    If $\norm{\prel} > R$, \eqref{eq:infinite_time_relationship} reduces to
    \begin{equation}\label{eq:vointcpadcpa}
        \VO_{E\mid O} = \{\, \mathbf{v}_E\in \mathbb{R}^2 \mid \TCPA > 0 \wedge \DCPA \le R \,\}.
    \end{equation}
\end{corollary}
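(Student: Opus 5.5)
The plan is to start from Theorem~\ref{thm:diskvo-cpa} together with the circular form \eqref{eq:vo-circ} of the \ac{VO}. Membership $\mathbf{v}_E \in \VO_{E\mid O}$ is, by definition, exactly the left-hand side of \eqref{eq:infinite_time_relationship}, with $\vrel = \mathbf{v}_O - \mathbf{v}_E$ for fixed $\mathbf{v}_O$ and $\prel$. Theorem~\ref{thm:diskvo-cpa} therefore gives
\begin{equation*}
\VO_{E\mid O} = \{\mathbf{v}_E \mid (\TCPA > 0 \wedge \DCPA \leq R) \vee (\TCPA \leq 0 \wedge \norm{\prel} \leq R)\}.
\end{equation*}
It then suffices to show that, under $\norm{\prel} > R$, the second disjunct is identically false for every $\mathbf{v}_E$, so the set collapses to \eqref{eq:vointcpadcpa}.

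The key step is immediate. The conjunct $\norm{\prel} \leq R$ depends only on the current relative position and not on $\mathbf{v}_E$. The hypothesis $\norm{\prel} > R$ therefore falsifies $(\TCPA \leq 0 \wedge \norm{\prel} \leq R)$ regardless of the value of $\TCPA$. By the identity $A \vee \bot \equiv A$, the right-hand side of \eqref{eq:infinite_time_relationship} reduces to $\TCPA > 0 \wedge \DCPA \leq R$ pointwise in $\mathbf{v}_E$, and the two sets coincide.

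The only point I would check explicitly is the degenerate case $\vrel = \mathbf{0}$, i.e.\ $\mathbf{v}_E = \mathbf{v}_O$, since Lemma~\ref{lemma:sq-dist} and the corollaries assume $\vrel \neq \mathbf{0}$. By \eqref{eq:tcpa}, $\TCPA = 0$ here, so the right-hand side of \eqref{eq:vointcpadcpa} excludes this velocity. Directly, $d(t,\prel,\mathbf{0}) = \norm{\prel} > R$ for all $t$, so $\mathbf{v}_O \notin \VO_{E\mid O}$ as well, and the two sides agree. With this case settled (or inherited from Theorem~\ref{thm:diskvo-cpa} if its proof is read as covering it via Corollary~\ref{cor:tcpalessthan0}), the corollary follows. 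I expect no real obstacle beyond this bookkeeping.
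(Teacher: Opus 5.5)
Your proposal is correct and takes essentially the same approach as the paper, which gives no separate proof and treats this corollary as an immediate consequence of Theorem~\ref{thm:diskvo-cpa}: under $\norm{\prel} > R$ the disjunct $(\TCPA \leq 0 \wedge \norm{\prel} \leq R)$ is false for every $\mathbf{v}_E$. Your extra check of the degenerate case $\vrel = \mathbf{0}$ is sound (there $\TCPA = 0$ and $d \equiv \norm{\prel} > R$), and that case is already covered through Corollary~\ref{cor:tcpalessthan0} in the theorem's proof.
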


The infinite-horizon \ac{VO} captures relative trajectories that violate the safety radius $R$ at some future point in time and states that violate $R$ at $t=0$. To avoid premature reactions to conflicts that may occur only far in the future, finite-horizon \acp{VO} are often used to restrict the time domain to a finite prediction horizon $t \in [0, \tau]$, where $\tau > 0$. This yields
\begin{equation}
    \VO_{E\mid O}(\tau) = \{\mathbf{v}_E \in \mathbb{R}^2 \mid \exists t \in [0, \tau], \norm{\prel + \vrel t} \leq R\}. \notag
\end{equation}
We now characterize finite-horizon \ac{VO} membership in terms of the \ac{CPA} metrics. 

\begin{table*}[t]
\caption{Support functions, support vectors, and computational complexity for common convex sets in $\mathbb{R}^2$.}
\label{tab:support}
\centering
\begin{tabular}{lccc}
\hline 
Set & $h_\mathcal{S}(\mathbf{d})$ & $v_\mathcal{S}(\mathbf{d})$ & Complexity \\
\hline
Unit Ball : $\{ \mathbf{x} \in \mathbb{R}^2 :\|\mathbf{x}\| \leq 1\}$ & $\|\mathbf{d}\|$
& $\frac{\mathbf{d}}{\|\mathbf{d}\|}$ & $\mathcal{O}(1)$ \\
Circle : $\{ \mathbf{x} \in \mathbb{R}^2 :\|\mathbf{x}-\mathbf{c}\| \leq r\}$ & $\mathbf{d}^\top \mathbf{c} + r\norm{\mathbf{d}}$
& $\mathbf{c} + r\frac{\mathbf{d}}{\norm{\mathbf{d}}}$ & $\mathcal{O}(1)$ \\
Ellipse : $\{ \mathbf{x} \in \mathbb{R}^2 :(\mathbf{x}-\mathbf{c})^\top Q^{-1}(\mathbf{x}-\mathbf{c}) \leq r\}$ & $\mathbf{d}^\top \mathbf{c} + \sqrt{\mathbf{d}^\top(rQ)\mathbf{d}}$
& $\mathbf{c} + \frac{(rQ)\mathbf{d}}{\sqrt{\mathbf{d}^\top(rQ)\mathbf{d}}}$ & $\mathcal{O}(1)$ \\
Polygon : $\{ \mathbf{x} \in \mathbb{R}^2 : H\mathbf{x} \leq \mathbf{b}\}$ & $\max_{\mathbf{x}\in \mathbb{R}^2} \mathbf{d}^\top\mathbf{x}, \text{ s.t. } H\mathbf{x} \leq \mathbf{b}$
& $\argmax_{\mathbf{x}\in \mathbb{R}^2} \mathbf{d}^\top\mathbf{x}, \text{ s.t. } H\mathbf{x} \leq \mathbf{b}$ & $\mathcal{O}(m)$\vspace{1mm} \\
\hline
\end{tabular}
\vspace{-3mm}
\end{table*}

\begin{theorem}[Finite-Horizon CPA Characterization]\label{thm:diskfhvo-cpa}
Given Definition~\ref{def:circles} and Assumption~\ref{a2},
\begin{align}
    &\exists t \in [0, \tau], \norm{\prel + \vrel t} \leq R \iff \notag \\
    &(\TCPA \leq 0 \wedge \norm{\prel} \leq R) \vee (0 < \TCPA \leq \tau \wedge \DCPA \leq R) \vee \notag \\
    & \hspace{38mm}(\TCPA > \tau \wedge \norm{\prel + \vrel \tau} \leq R) \notag
\end{align}
\begin{proof}\\
\noindent \textbf{\ac{VO} Condition $\implies$ \ac{CPA} Metrics.}\\
Suppose 
$\exists t \in [0, \tau]$ s.t. $\norm{\prel + \vrel t} =d(t, \prel, \vrel) \leq R.$ 
Then, the minimum of $d(t, \prel, \vrel)$ over $[0, \tau]$ is bounded by $R$. 
If $\TCPA \leq 0$, the minimum value of $d(t, \prel, \vrel)$ is $\| \prel \|$, by Corollary \ref{cor:tcpalessthan0}. Hence,  $(\TCPA \leq 0 \wedge \| \prel \| \leq R)$.
If $\TCPA > 0$ and $\TCPA \leq \tau$, the minimum value of $d(t, \prel, \vrel)$ is $\DCPA$, since the minimizer in Corollary \ref{cor:tcpagreaterthan0} lies within $[0, \tau]$. Hence, $(0 < \TCPA \leq \tau \wedge \DCPA \leq R)$.
If $\TCPA > \tau$, the minimizer of $d^2(t, \prel, \vrel)$ lies strictly to the right of $[0, \tau]$. Since $d^2(t, \prel, \vrel)$ is convex in $t$, it is monotonically decreasing $\forall t \leq \TCPA$. Therefore, the minimum on $[0, \tau]$ is attained at the upper boundary $t=\tau$:
\begin{align}
    \min_{t \in [0, \tau]} d^2(t, \prel, \vrel) 
    &= \| \prel + \vrel \tau\|^2 \leq R^2, \notag
\end{align}
which implies $(\TCPA >\tau \wedge \| \prel + \vrel \tau\| \leq R)$. 

\noindent \textbf{\ac{CPA} Metrics $\implies$ \ac{VO} Condition.}\\
Suppose $ (\TCPA \leq 0 \wedge \norm{\prel} \leq R) \vee (0 < \TCPA \leq \tau \wedge \DCPA \leq R) \vee 
    (\TCPA > \tau \wedge \norm{\prel + \vrel \tau} \leq R)$ holds.
If $\TCPA \leq 0$ and $\| \prel \| \leq R$, choose $t'=0$. Since $ d(0, \prel, \vrel) = \| \prel \| \leq R$, it follows that $\exists t \in [0,\tau] $ s.t. $ \| \prel + \vrel t \| \leq R.$ 
If $(0 < \TCPA \leq \tau \wedge \DCPA \leq R)$, choose $t' = \TCPA \in [0, \tau]$. Since $d(\TCPA, \prel, \vrel) = \DCPA \leq R$, it follows that $\exists t \in [0, \tau] $ s.t. $ \| \prel + \vrel t \| \leq R.$
If $(\TCPA > \tau \wedge \norm{\prel + \vrel \tau} \leq R)$, $\exists t \in [0, \tau] \text{ s.t. } \| \prel + \vrel t \| \leq R.$
\qed
\end{proof}
\end{theorem}

\section{Collision Risk Metrics under Uncertainty}\label{sec:uncertainty}
In practice, relative states are obtained through estimation and therefore subject to noise and uncertainty. 
In this section, we relax Assumption~\ref{a2} and extend the \ac{CPA} metrics and \ac{VO} to relative positions and velocities represented 
as convex sets.

\subsection{\ac{UVO}}
We assume bounded uncertainty of $\mathbf{p}_E$, $\mathbf{p}_O$, $\mathbf{v}_E$, and $\mathbf{v}_O$, resulting in uncertain relative position and velocity.
Previous work incorporates bounded state uncertainty in \acp{VO} by inflating them using ellipsoidal uncertainty sets \cite{ellingson2020, ellingson2025}.
For position uncertainty, this inflation is computed exactly as the Minkowski sum of the nominal collision region and the position uncertainty set. 
However, velocity uncertainty causes the \ac{VO} to lose its conic structure, which \cite{ellingson2020} addresses by using cone boundaries tangent to the velocity uncertainty ellipse for over-approximating the velocity-dependent collision set.  
In contrast, we represent position and velocity uncertainty using general convex sets, incorporate position uncertainty through a Minkowski inflation of the collision region, and incorporate velocity uncertainty through an exact Minkowski sum with the \ac{VO}.
This formulation avoids conic approximations, preserving exact set geometry.

\begin{figure*}[b]
  \centering
  \resizebox{.75\linewidth}{!}{
  \usetikzlibrary{arrows.meta, calc}

\definecolor{colorA}{HTML}{16A085}      
\definecolor{colorB}{HTML}{8E44AD}      
\definecolor{colorRisk}{HTML}{CE1256}   

\newcommand{\agentsymbol}[5]{%
  \begin{scope}[shift={(#1,#2)}, rotate=#3]
    \fill[#4, draw=#5, line width=0.6pt]
      (0, 0.20) -- (-0.12, -0.12) -- (0, -0.04) -- (0.12, -0.12) -- cycle;
  \end{scope}
}
\newcommand{\agentsymbolcircle}[7]{%
  \draw[#6, line width=1.0pt] (#1,#2) circle (#7);
  \agentsymbol{#1}{#2}{#3}{#4}{#5}
}

\def\Wscale{0.03}   
\def\Vscale{0.15}   
\def\Yfac{0.35}      
\def\rEgo{10}       
\def\rObs{10}       

\newcommand{\ship}[8]{%
  \pgfmathsetmacro{\sx}{#1*\Wscale}
  \pgfmathsetmacro{\sy}{#2*\Wscale*\Yfac}
  \pgfmathsetmacro{\svx}{#3*\Vscale}
  \pgfmathsetmacro{\svy}{#4*\Vscale}
  \pgfmathsetmacro{\sn}{sqrt(\svx*\svx+\svy*\svy)}
  \pgfmathsetmacro{\sang}{atan2(#4,#3)-90}
  \pgfmathsetmacro{\srad}{#7*\Wscale}
  \draw[->, #5, line width=1.2pt] (\sx,\sy) -- ++(\svx,\svy);
  \node[#5, font=\scriptsize] at
    ({\sx+0.75*\svx-#8*0.3*\svy/\sn}, {\sy+0.75*\svy+#8*0.3*\svx/\sn}) {$\mathbf{v}_{#6}$};
  \agentsymbolcircle{\sx}{\sy}{\sang}{#5}{#5}{#5}{\srad}
  \node[#5, font=\scriptsize] at
    ({\sx+(\srad+0.22)*\svy/\sn}, {\sy-(\srad+0.22)*\svx/\sn}){};
}

\newcommand{\scenario}[9]{%
  \begin{scope}[shift={(#1,0)}]
    \pgfmathsetmacro{\losx}{#2*\Wscale}
    \pgfmathsetmacro{\losy}{#3*\Wscale*\Yfac}
    \pgfmathsetmacro{\losn}{sqrt(\losx*\losx+\losy*\losy)}
    \pgfmathsetmacro{\Rrange}{veclen(#2,#3)}
    \draw[colorRisk, line width=0.9pt, dash pattern=on 4pt off 3pt]
      (0,0) -- (\losx,\losy);
    \node[colorRisk, font=\scriptsize] at
      ({\losx/2+0.55*\losy/\losn}, {\losy/2-0.55*\losx/\losn}){};

    \ship{#2}{#3}{#4}{#5}{colorB}{O}{\rObs}{1}
    \ship{0}{0}{#6}{#7}{colorA}{E}{\rEgo}{#9}

    \node[gray!60, font=\small\bfseries] at ({\losx/2}, -1.3) {#8};
  \end{scope}
}

\begin{tikzpicture}[
  >=Stealth,
  font=\small,
  line cap=round,
  line join=round
]

\pgfmathsetmacro{\ytop}{6.0*\Yfac+0.9}   


\begin{scope}[shift={(-0.5, 1.5)}]   
  \draw[gray!60, rounded corners=4pt, line width=1pt]
    (0, 0) rectangle (2.7, 1.55);

  \agentsymbol{0.35}{1.18}{0}{colorA}{colorA}
  \node[right, colorA] at (0.6, 1.18) {\textbf{E / ego}};

  \agentsymbol{0.35}{0.78}{0}{colorB}{colorB}
  \node[right, colorB] at (0.6, 0.78) {\textbf{O / obstacle}};

  \draw[colorRisk, line width=1.2pt] (0.23, 0.38) -- (0.47, 0.38);
  \node[right, colorRisk] at (0.6, 0.38) {line of sight};
\end{scope}


\draw[gray!50, line width=1pt] (4.4,  -1.7) -- (4.4,  \ytop);
\draw[gray!50, line width=1pt] (7.7,  -1.7) -- (7.7,  \ytop);
\draw[gray!50, line width=1pt] (10.3, -1.7) -- (10.3, \ytop);
\draw[gray!50, line width=1pt] (13.4, -1.7) -- (13.4, \ytop);

\scenario{ 6.0}{  0}{200}{ 0}{-5}{0}{ 5}{Head-on}{1}
\scenario{ 0.0}{100}{100}{ 0}{-5}{5}{ 0}{Crossing}{-1}
\scenario{ 9.0}{  0}{120}{ 0}{ 4}{0}{ 8}{Overtaking}{1}
\scenario{12.0}{  0}{200}{ 0}{-5}{0}{-5}{Collinear}{1}
\scenario{15.0}{ 50}{  0}{ 0}{ 5}{0}{ 5}{Parallel}{1}

\pgfresetboundingbox
\path (1.0, -1.8) rectangle (16.5, {\ytop+0.2});
\end{tikzpicture}
  }
  \caption{Investigated encounters for evaluating collision risk metrics in 2D surface navigation.}
  \vspace*{-4mm}
  \label{fig:scenarios}
\end{figure*}

\begin{definition}[Uncertainty-Aware Velocity Obstacle (UVO)]\label{def:inflation}
    Let $\prelhat, \vrelhat$ denote the estimated relative position and velocity, respectively.
    Suppose the true values satisfy
    $\prel \in \prelhat \oplus \mathcal{K}_p$ and $\vrel \in \vrelhat \oplus \mathcal{K}_{v}$, where $\mathcal{K}_p$ and $\mathcal{K}_{v}$ are convex sets.
    Then, the \ac{UVO} is the set of ego velocities that result in a collision during the infinite prediction horizon:
    \begin{align}\label{eq:robust-vo}
        \UVO_{E\mid O} = &\Big\{\mathbf{v}_E \mid \vrelhat \in \{\mathbf{v} \mid\exists t\geq 0, \\
        &\prelhat + \mathbf{v} t \in \mathcal{E}\oplus (-\mathcal{O})\oplus (-\mathcal{K}_p)\} \oplus (-\mathcal{K}_{v})\Big\}. \notag
    \end{align}
\end{definition}

Constructing \acp{UVO} for convex sets requires repeated Minkowski sums. Rather than computing these sums explicitly, we represent each set by its support function \cite{Boyd_Vandenberghe_2004, girard2008}. 
The support function $h_\mathcal{S}: \mathbb{R}^n \to \mathbb{R}$ of a compact convex set $\mathcal{S} \subset \mathbb{R}^n$ is determined by the position of the supporting hyperplane in a given direction $\mathbf{d} \in \mathbb{R}^n$, $h_\mathcal{S}(\mathbf{d}) = \max_{\mathbf{x} \in \mathcal{S}}\mathbf{d} \cdot \mathbf{x}$~\cite{althoff2021}.
A corresponding support vector $v_{\mathcal{S}}(\mathbf{d}) \in \mathcal{S}$ is a boundary point of $\mathcal{S}$ that maximizes the dot product along $\mathbf{d}$: $\mathbf{d} \cdot v_{\mathcal{S}}(\mathbf{d}) = h_{\mathcal{S}}(\mathbf{d})$ \cite{girard2008}. 
Point containment of $\mathbf{x}' \in \mathbb{R}^n$ in $\mathcal{S}$ can then be verified by checking $\mathbf{d} \cdot \mathbf{x}' \leq \mathbf{d} \cdot v_{\mathcal{S}}(\mathbf{d}), \forall \mathbf{d} \in \mathbb{R}^n.$
Importantly, Minkowski addition reduces to point-wise addition under this representation: $h_{\mathcal{S}_1 \oplus \mathcal{S}_2}(\mathbf{d}) = h_{\mathcal{S}_1}(\mathbf{d}) + h_{\mathcal{S}_2}(\mathbf{d})$. 
Closed-form solutions exist for unit balls and ellipsoids, while polytopes require solving a linear program. The runtime complexity of these operations across common convex set representations is summarized in Table~\ref{tab:support}. 
While we consider general convex uncertainty sets, ellipsoids are of particular interest because they provide a natural representation of Gaussian uncertainty.

\subsection{Uncertainty-Aware \ac{CPA} Metrics}
Given the uncertainty sets used to obtain the \ac{UVO}, \ac{TCPA} and \ac{DCPA} can similarly be extended to account for relative state uncertainty. In particular, we formulate the following optimization problem over convex uncertainty sets $\mathcal{K}_p$ and $\mathcal{K}_{v}$ to bound a given \ac{CPA} metric:
\begin{align}\label{eq:CPA_uncertainty_opt}
& \underset{\prel, \vrel}{\min}\; \CPA \text{ s.t. } \prel \in \hat{\mathbf{p}}_{rel}\oplus\mathcal{K}_p, \vrel \in \hat{\mathbf{v}}_{rel}\oplus \mathcal{K}_v.
\end{align}
Replacing $\CPA$ with $\TCPA$ or $\DCPA$ yields the lower bound, while replacing it with  $- \TCPA$ or $- \DCPA$ yields the corresponding upper bound. Thus, solving each pair of optimization problems provides an interval containing possible \ac{CPA} metric values under the given uncertainty. 
The resulting optimization problems are generally nonconvex with respect to relative position and velocity.
Therefore, their computational cost depends on the chosen solution technique (see Sec. \ref{sec:experiments}).

In contrast to the deterministic case, the independently computed uncertainty intervals for \ac{TCPA} and \ac{DCPA} do not directly allow evaluation of Corollary~\ref{cor:vo-cpa}. The conjunction in Eq.~\eqref{eq:vointcpadcpa} depends on the joint relationship between the \ac{CPA} measures, which is not captured by their independently computed intervals in Eq.~\eqref{eq:CPA_uncertainty_opt}. 

\section{Numerical Experiments} \label{sec:experiments}
Assuming perfect state information, we numerically validate the equivalence between the \ac{CPA} metrics and \ac{VO} derived in Sec.~\ref{sec:theory}. We then introduce bounded uncertainty in relative position and velocity across a range of encounters in 2D surface navigation, as illustrated in Fig.~\ref{fig:scenarios}, and investigate the effects of this uncertainty on the individual metrics and their equivalence. We evaluate collision risk using a point-wise estimate given by the fraction of sampled points satisfying the collision condition associated with each metric.~\footnote{For reproducibility, we provide our implementation at: \href{https://github.com/eadietri/Collision_Risk_Metrics.git}{github.com/eadietri/Collision\_Risk\_Metrics.git}}.

\begin{figure*}[t]
\vspace{-2mm}
  \centering
  \resizebox{.75\linewidth}{!}{
  \input{figures/equivalence}
  }
  \caption{
  Evaluation of \ac{CPA} metrics for a crossing encounter with sampled relative velocities in $[-10,10]^2$. DCPA defines two cones, while TCPA identifies the region of possible future collisions. Their conjunction, evaluated using quantitative semantics, recovers the VO in Corollary~\ref{cor:vo-cpa}. Color indicates the normalized magnitude of each metric, signed positive for satisfaction and negative for violation, scaled to $[0,1]$ using the 95th percentile of each metric. }
  \label{fig:equivalence}
  \vspace{-3mm}
\end{figure*}

\textbf{Equivalence:} To validate Theorem~\ref{thm:diskvo-cpa}, and specifically its characterization for initially separated agents in Corollary~\ref{cor:vo-cpa}, we visualize the individual \ac{CPA} metrics and their conjunction. We uniformly sample ego velocities $\mathbf{v}_{E}$ over the velocity space shown in Fig.~\ref{fig:equivalence}, compute the corresponding \ac{DCPA} and \ac{TCPA}, and shade each point according to the normalized signed value of each metric.
As shown in Fig.~\ref{fig:equivalence}, the conjunction of the \ac{CPA} conditions recovers the \ac{VO}.

\textbf{Bounded Uncertainty:} We consider a two-agent crossing encounter with relative position $\prel$, ego velocity $\mathbf{v}_{E}$, and obstacle velocity $\mathbf{v}_{O}$ bounded by elliptical uncertainty sets $\mathcal{K}_P, \mathcal{K}_E, \mathcal{K}_O$, respectively, centered on their nominal values. 
The relative velocity uncertainty set is obtained through point-wise subtraction of $\mathcal{K}_O$ and $\mathcal{K}_E$, and represented by an axis-aligned interval $\mathcal{K}_v$. 

\addtolength{\textheight}{-2mm}  

We first compute interval bounds on \ac{TCPA} and \ac{DCPA} by solving Eq.~\eqref{eq:CPA_uncertainty_opt} using SciPy's SLSQP solver.
The ellipsoidal sets are characterized by support functions, resulting in a fixed number of decision variables in two dimensions. Consequently, each optimization iteration has $\mathcal{O}(1)$ computational cost, yielding an overall complexity of $\mathcal{O}(K)$ for $K$ iterations. 
We uniformly sample $10{,}000$ pairs of positions and velocities from the uncertainty sets and compute the corresponding \ac{TCPA} and \ac{DCPA}.
Fig.~\ref{fig:uniform} shows their joint distribution, with samples satisfying Corollary~\ref{cor:vo-cpa}'s collision condition highlighted in pink, and the resulting \ac{TCPA} and \ac{DCPA} intervals from Eq.~\eqref{eq:CPA_uncertainty_opt} shown in green.
We then compare the collision risk identified by the \ac{CPA} metrics with that obtained from the deterministic and uncertainty-aware \acp{VO}. Fig.~\ref{fig:uniform} depicts both \acp{VO} in velocity space. The uncertainty set of the ego velocity $\mathcal{K}_E$ lies entirely outside the deterministic \ac{VO}, but partially overlaps with the \ac{UVO}. 
The resulting collision risks are $0.192$ for the \ac{CPA}-based condition of Corollary~\ref{cor:vo-cpa}, $0.000$ for the deterministic \ac{VO}, and $0.531$ for the \ac{UVO}. These results demonstrate that independently bounded \ac{CPA} metrics are insufficient for characterizing collision risk under uncertainty, whereas the \ac{UVO} preserves characterization of worst-case collision risk.

\begin{figure}[b]
\centering
\resizebox{\linewidth}{!}{

\begin{tikzpicture}
\begin{groupplot}[
    group style={
        group size=2 by 1,
        horizontal sep=1.7cm,
    },
    width=\columnwidth,
    height=\columnwidth,
    title={},
    xlabel style={font=\Large},
    title style={font=\large},
    ticklabel style={font=\large},
    tick align=outside,
    xtick pos=left,
    ytick pos=left,
]

\nextgroupplot[
    xmin=2.0386, xmax=47.9284,
    ymin=-100, ymax=194.0223,
    xlabel={$\TCPA [s]$},
    ylabel={$\DCPA [m]$},
    ylabel style={font=\Large, yshift=-0.5cm},
]
  \addplot graphics [
    xmin=2.0386, xmax=47.9284,
    ymin=-194.0223, ymax=194.0223,
  ] {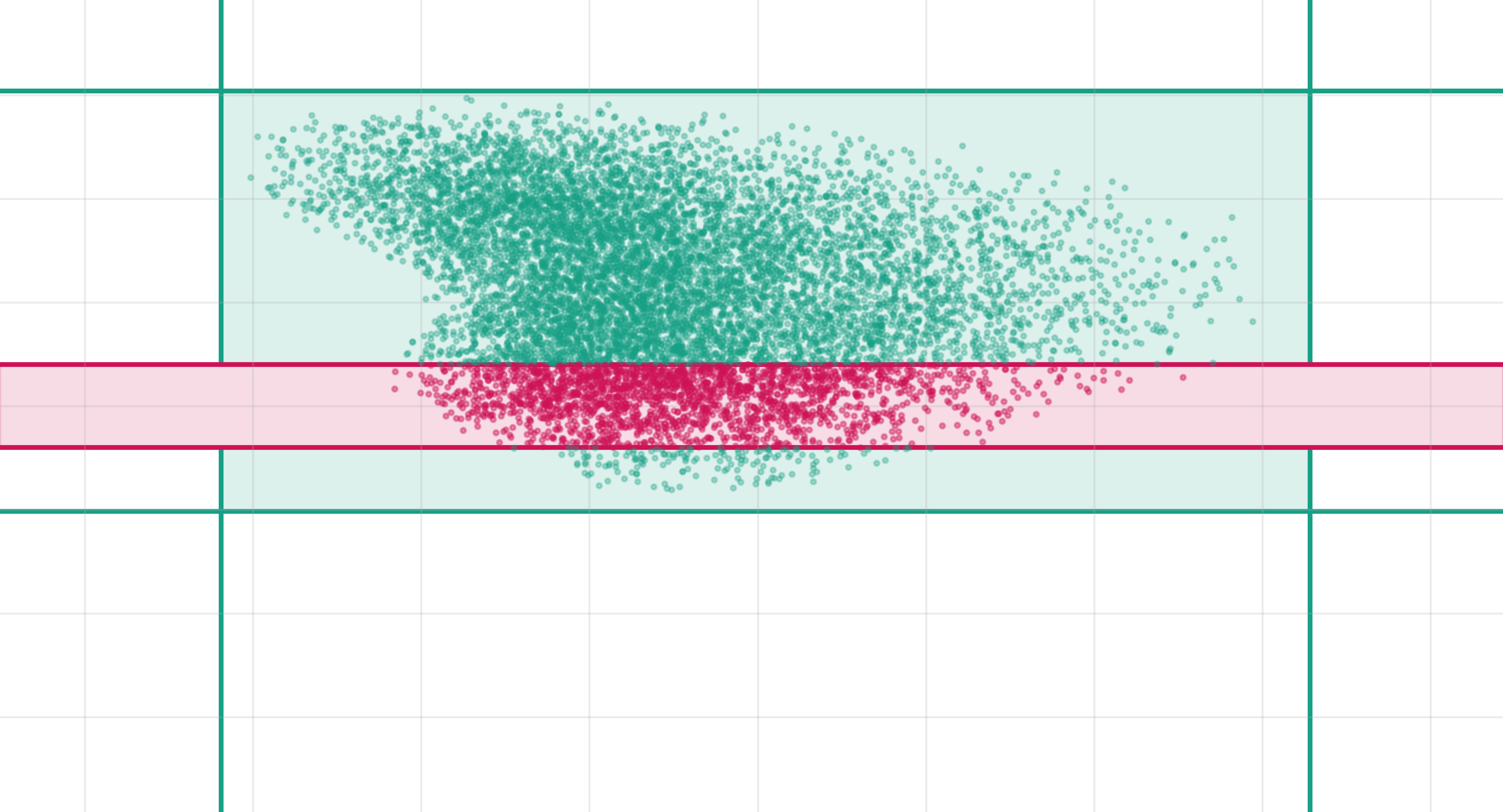};

\nextgroupplot[
    xmin=-3.8, xmax=7,
    ymin=-7.75, ymax=3.5,
    axis equal image,
    xlabel={$\mathbf{v}_{\text{East}}[m/s]$},
    ylabel={$\mathbf{v}_{\text{North}}[m/s]$},
    ylabel style={font=\Large, yshift=-0.2cm},
]
  \addplot graphics [
    xmin=-4.2809, xmax=7.7870,
    ymin=-8.1347, ymax=3.9332,
  ] {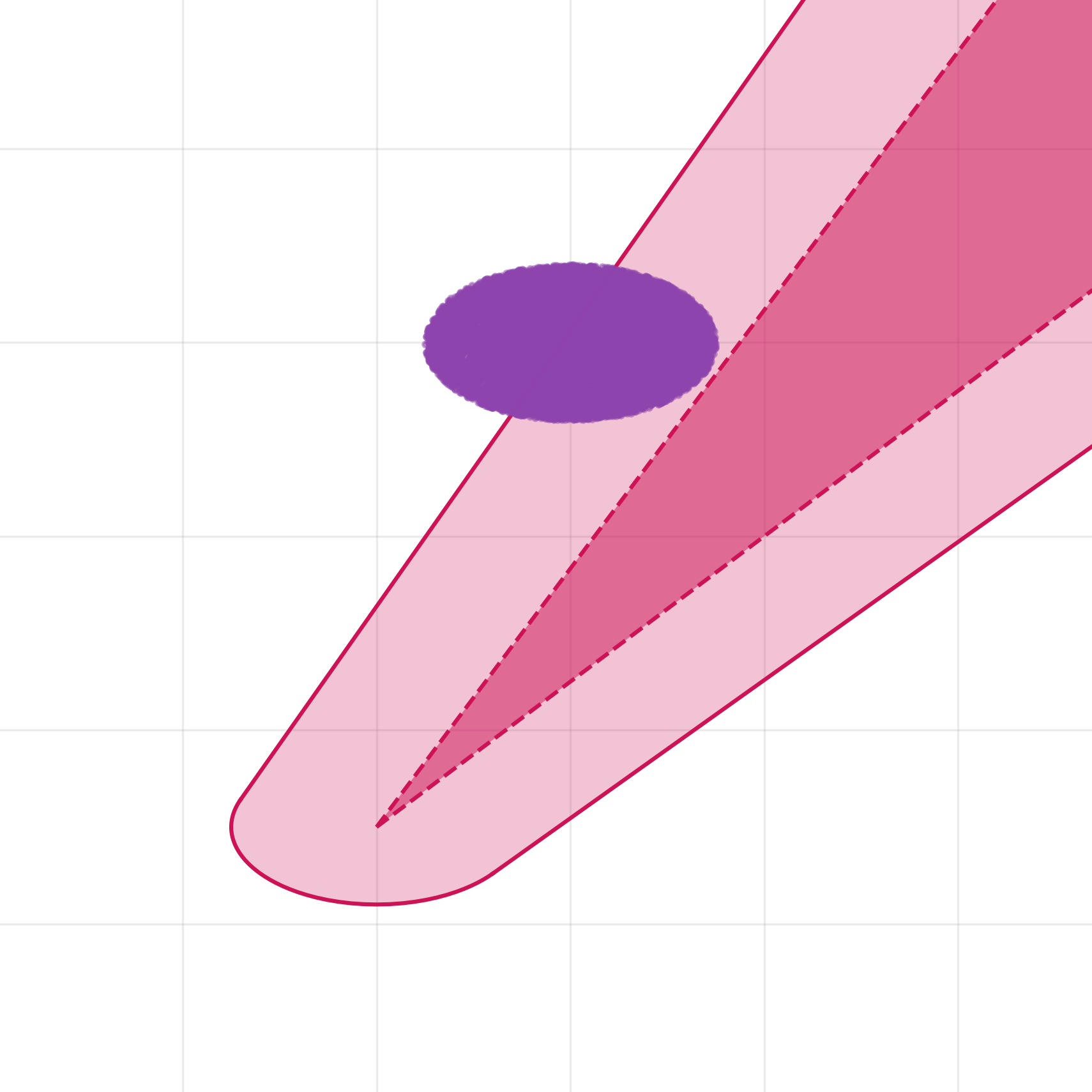};

\end{groupplot}
\end{tikzpicture}
    }
  \caption{Collision risk metrics under bounded state uncertainty for crossing encounter: (Left) Joint distribution of $10{,}000$ TCPA and DCPA values, with collision condition of Corollary~\ref{cor:vo-cpa} (pink) and interval bounds from Eq.~\eqref{eq:CPA_uncertainty_opt} (green). (Right) Deterministic VO (dark pink) and UVO (light pink) in velocity space, with the ego-velocity uncertainty set $\mathcal{K}_E$ (purple).}
  \label{fig:uniform}
\end{figure}

\textbf{Estimated Collision Risk:} We compare the collision risk identified by the \ac{CPA} metrics and \acp{VO} across five distinct agent encounters, illustrated in Fig.~\ref{fig:scenarios}, using Monte Carlo sampling. The encounters use the bounded position and velocity uncertainty formulation introduced above.
For each encounter, we uniformly sample $10{,}000$ pairs of positions and velocities from the uncertainty sets, compute the corresponding \ac{TCPA} and \ac{DCPA}, and construct the deterministic and uncertainty-aware \acp{VO}. The empirical collision risk for each metric is reported in Table~\ref{tab:accuracy}. The \ac{UVO} yields a collision risk of approximately 1.000 across all encounters, consistent with its worst-case formulation. The \ac{UVO} encompasses the union of all velocity obstacles over the specified uncertainty bounds, and therefore classifies an encounter as a collision whenever a collision is possible under any realization of the uncertainty.

\section{Conclusion}
The \ac{CPA} metrics and \ac{VO} provide related representations of collision risk. Our equivalence results show that, in the absence of uncertainty and under the stated assumptions, the geometry of the \ac{VO} encodes the \ac{CPA}-based risk information for all candidate trajectories without loss of information. 
However, the \ac{VO} and \ac{CPA} metrics differ in how they accommodate uncertainty. For the \ac{VO}, convex uncertainty sets in relative position and velocity can be incorporated directly through Minkowski sums, which can be evaluated efficiently using support functions. 
In contrast, the \ac{CPA} metrics 
require both the \ac{TCPA} and \ac{DCPA} to provide a \ac{UVO}-related assessment of collision risk. Computing the extrema of these measures over uncertainty sets requires solving nonconvex optimization problems, and independent optimization of the two metrics can yield extrema attained at different states within the set.
Consequently, the conditions for \ac{VO} membership expressed in terms of \ac{CPA} metrics do not generally hold under uncertainty, as confirmed by our empirical results.
Future work should assess how these risk metrics can be used to design safe controllers under uncertainty, extend the analysis to a probabilistic setting with unbounded disturbances, and investigate the uncertainty-aware \acp{UVO} across finite time horizons. 

\begin{table}[t]
  \caption{Point Estimates of Collision Risk Identified by the CPA metrics, VO, and UVO
  }
  \label{tab:accuracy}
  \centering
  \small
  \setlength{\tabcolsep}{6pt}
  \renewcommand{\arraystretch}{1.15}
  \sisetup{table-format=1.3, detect-weight=true, mode=text}
  \begin{tabular}{l *{5}{S}}
    \toprule
    {\textbf{Metric}} & {\textbf{Cr}} & {\textbf{H}} & {\textbf{O}} & {\textbf{Co}} & {\textbf{P}} \\
    \midrule
    $\UVO$               & 1.000 & 1.000 & 1.000 & 0.999 & 1.000 \\
    $\VO$                & 0.917 & 0.776 & 0.554 & 0.017 & 0.216 \\
    $\DCPA \wedge \TCPA$   & 0.699 & 0.627 & 0.519 & 0.019 & 0.212 \\
    $\DCPA$              & 0.699 & 0.627 & 0.519 & 0.036 & 0.428 \\
    $\TCPA$              & 1.000 & 1.000 & 1.000 & 0.501 & 0.505 \\
    \bottomrule
  \end{tabular}
  \begin{center}
      \footnotesize{Cr: Crossing, H: Head-on, O: Overtaking, Co: Collinear, P: Parallel}
  \end{center}
  \vspace{-4mm}
\end{table}




\bibliography{literature}
\bibliographystyle{IEEEtran}

\end{document}